\newtheorem{theorem}{Theorem}[section]
\newtheorem{lemma}[theorem]{Lemma}
  \providecommand\BibTeX{{%
    \normalfont B\kern-0.5em{\scshape i\kern-0.25em b}\kern-0.8em\TeX}}}
\begin{document}

%%
%% The "title" command has an optional parameter,
%% allowing the author to define a "short title" to be used in page headers.
\title{Efficient Sparse Spherical k-Means for Document Clustering}

%%
%% The "author" command and its associated commands are used to define
%% the authors and their affiliations.
%% Of note is the shared affiliation of the first two authors, and the
%% "authornote" and "authornotemark" commands
%% used to denote shared contribution to the research.
\author{Johannes Knittel}
%\authornote{Both authors contributed equally to this research.}
\email{johannes.knittel@vis.uni-stuttgart.de}
\orcid{0000-0002-4889-5232}
\affiliation{%
  \institution{University of Stuttgart}
%  \streetaddress{P.O. Box 1212}
  \city{Stuttgart}
%  \state{Ohio}
  \country{Germany}
%  \postcode{43017-6221}
}

\author{Steffen Koch}
%\authornotemark[1]
\email{steffen.koch@vis.uni-stuttgart.de}
\orcid{0000-0002-8123-8330}
\affiliation{%
  \institution{University of Stuttgart}
%  \streetaddress{P.O. Box 1212}
  \city{Stuttgart}
%  \state{Ohio}
  \country{Germany}
%  \postcode{43017-6221}
}

\author{Thomas Ertl}
\email{thomas.ertl@vis.uni-stuttgart.de}
\orcid{0000-0003-4019-2505}
\affiliation{%
  \institution{University of Stuttgart}
%  \streetaddress{P.O. Box 1212}
  \city{Stuttgart}
%  \state{Ohio}
  \country{Germany}
%  \postcode{43017-6221}
}

%%
%% By default, the full list of authors will be used in the page
%% headers. Often, this list is too long, and will overlap
%% other information printed in the page headers. This command allows
%% the author to define a more concise list
%% of authors' names for this purpose.
\renewcommand{\shortauthors}{Knittel et al.}

%%
%% The abstract is a short summary of the work to be presented in the
%% article.
\begin{abstract}
Spherical $k$-Means is frequently used to cluster document collections because it performs reasonably well in many settings and is computationally efficient.
However, the time complexity increases linearly with the number of clusters $k$, which limits the suitability of the algorithm for larger values of $k$ depending on the size of the collection.
Optimizations targeted at the Euclidean $k$-Means algorithm largely do not apply because the cosine distance is not a metric.
We therefore propose an efficient indexing structure to improve the scalability of Spherical $k$-Means with respect to $k$.
Our approach exploits the sparsity of the input vectors and the convergence behavior of $k$-Means to reduce the number of comparisons on each iteration significantly.
\end{abstract}

%%
%% The code below is generated by the tool at http://dl.acm.org/ccs.cfm.
%% Please copy and paste the code instead of the example below.
%%
\begin{CCSXML}
<ccs2012>
   <concept>
       <concept_id>10002951.10003317.10003347.10003352</concept_id>
       <concept_desc>Information systems~Information extraction</concept_desc>
       <concept_significance>500</concept_significance>
       </concept>
   <concept>
       <concept_id>10002951.10003317.10003347.10003357</concept_id>
       <concept_desc>Information systems~Summarization</concept_desc>
       <concept_significance>500</concept_significance>
       </concept>
 </ccs2012>
\end{CCSXML}

\ccsdesc[500]{Information systems~Information extraction}
\ccsdesc[500]{Information systems~Summarization}

%%
%% Keywords. The author(s) should pick words that accurately describe
%% the work being presented. Separate the keywords with commas.
\keywords{document clustering, k-Means, large-scale analysis}

%% A "teaser" image appears between the author and affiliation
%% information and the body of the document, and typically spans the
%% page.
%\begin{teaserfigure}
%  \includegraphics[width=\textwidth]{sampleteaser}
%  \caption{Seattle Mariners at Spring Training, 2010.}
%  \Description{Enjoying the baseball game from the third-base
%  seats. Ichiro Suzuki preparing to bat.}
%  \label{fig:teaser}
%\end{teaserfigure}

%%
%% This command processes the author and affiliation and title
%% information and builds the first part of the formatted document.
\maketitle

\section{Introduction}

Clustering algorithms facilitate the analysis of large data sets, particularly if they comprise unstructured data such as textual documents.
Spherical $k$-Means~\cite{Dhillon2001} has been proposed for the clustering of documents that have high-dimensional ($\gg 1000$) but very sparse vector representations.
It is based on $k$-Means~\cite{Lloyd1982} and replaces the Euclidean with the cosine distance to improve the performance on documents.
Benchmarks have shown that Spherical $k$-Means performs competitively on a variety of textual data sets compared to more advanced approaches and is computationally efficient if $k$ is sufficiently small~\cite{lelu:hal-03053176}.

However, on each iteration, we have to find the closest cluster centroid for every data item, which results in $\mathcal{O}(kN)$ comparisons where we have to compute the distance between two vectors.
The linear dependency of $k$ on the time complexity can lead to prohibitively large running times on larger data sets with $k \gg 10$, which limits the utility of Spherical $k$-Means for fine-grained analyses on big document collections.

Several approaches have been developed to increase the computational efficiency of $k$-Means.
Elkan~\cite{Elkan2003} applied the triangle inequality to reduce the number of distance calculations that have to be performed, but the triangle inequality does not hold for the cosine distance function.
Data structures for efficient nearest neighbor searches that are based on, for instance, k-d trees~\cite{Bentley1975}, coordinate-pruning~\cite{Teflioudi2016}, or product quantization codes~\cite{Johnson2019}, generally assume moderately-sized, dense input vectors, or are based on the triangle inequality.

Numerous online versions of $k$-Means have been proposed, including greedily updating the cluster centroids on each incoming element~\cite{Chakrabarti2006}, applying $k$-Means on separate batches and using the resulting centroids as input for the global clustering~\cite{Ailon2009}, or performing clustering only on a sampled subset of the data~\cite{Ackermann2010,Braverman2011}.
While computationally very efficient, these approaches only approximate the $k$-Means objective.

We propose an accelerated version of Spherical $k$-Means that can efficiently cluster large document collections even if $k \gg 10$.
We introduce an indexing structure that leverages the sparsity of the input vectors for an efficient (but non-approximated) retrieval of cluster centroids that maximize the cosine similarity.
We also exploit the observation that the number of changing cluster centroids typically decreases after several iterations.
Both strategies significantly reduce the average number of pairwise distance calculations that have to be performed on each iteration.

\section{Spherical k-Means}
\label{sec:sphericalKMeans}

The Euclidean $k$-Means algorithm~\cite{Lloyd1982} aims to find in an iterative way a partition of the data set that minimizes its clustering objective.
That is, every data item $\mathbf{x}_i$ gets associated with one of the $k$ clusters such that the sum of the squared differences between data items and their assigned centroids (i.e., mean of all associated items in the cluster) is minimal.
The algorithm initially assigns every item to a cluster (e.g., with the $k$-Means++ strategy).
Afterward, it optimizes the objective iteratively with a loop that comprises two steps.
First, the cluster centroids $c_j$ are recalculated based on the current assignments.
Then, the assignments are updated, that is, for every data item $\mathbf{x}_i$ we find the cluster $j$ such that the squared distance $\|\mathbf{x}_i - \mathbf{c}_j \|_2^2$ is minimized.
The loop stops if the assignments do not change anymore or if a specific criterion is met, e.g., the maximum difference between subsequent cluster centroids is sufficiently small.

In the case of document clustering, input vectors are often high-dimensional but sparse.
For instance, every term in the corpus may be assigned to an index position (the vocabulary) and then each document may be represented by a vector in which those entries are set to 1 (or to the corresponding TF-IDF weight~\cite{Salton1988}) that represent the present terms in the document.
For such document clustering use cases, Spherical $k$-Means~\cite{Dhillon2001} was proposed that applies the cosine similarity instead of the Euclidean distance.
Every input vector is supposed to have unit length, and the centroids are calculated as the sum of the associated items, divided by the length to obtain unit vectors as well.
The data items are then assigned to the cluster with the highest cosine similarity (which is equivalent to the dot product of the unit-length vectors).
Let $p,k$ be the number of non-zero vector entries of vectors $\mathbf{a}$ and $\mathbf{b}$, respectively, then the complexity of calculating the dot product $\mathbf{a}^\top \mathbf{b}$ is in $\mathcal{O}\left(\mathrm{min}(p, k)\right)$, given a map-like data structure of the vectors.
Thus, the sparser the input data, the faster the clustering.

\section{Method}

In this section, we propose two complementing strategies to accelerate the Spherical $k$-Means algorithm on sparse document representations.
The vast majority of the running time is spent on calculating the cosine similarity between an input vector and a cluster centroid.
Both strategies aim to reduce the average number of these computations that have to be performed.

\subsection{Non-Changing Clusters}
\label{sec:NCC}

Over the course of the iterations, the number of affected data items typically decreases, that means, less and less data items change their cluster association during the assignment step.
As a result, there is an increasing number of cluster centroids that stay the same in later iterations.

After recalculating the cluster centroids, we determine the set of centroids $C_u$ that have not changed compared to the previous iteration (within a certain tolerance $\epsilon$ to accommodate for rounding errors).
During the assignment step, we then check whether the current data item $\mathbf{x}_i$ was previously associated with a cluster in $C_u$.
If this is the case, we only need to calculate the similarity of $\mathbf{x}_i$ to centroids $c_j \not\in C_u$ that have actually changed and compare whether we get a higher similarity than with our previous association $a_i$, because we already know from the past iteration that $a_i$ relates to the most similar centroid among $C_u$.

\subsection{Dot Product Indexing Structure}

In the assignment step, we search for the centroid that maximizes the dot product with the current item.
For sparse input vectors, it could very well be the case that the non-zero entries of a centroid do not overlap with the non-zero entries of the current item, leading to a dot product of $0$.
To ignore such centroids, we could build an inverse index at the beginning of this step that maps an index to all centroids that have a non-zero value at that position.
Given an item, we can then enumerate through all of its non-zero values to retrieve the union of the centroids that share at least one non-zero entry.
The dot product with any remaining centroid is zero.
This can lead to a measurable speed-up if both the average number of non-zero input vector entries is sufficiently small and the centroids are distinct enough.
However, even in sparse settings we cannot generally assume that this is the case.

To improve our indexing structure we can exploit the fact that the input and centroid vectors have length one, and that the dot product with the (possibly updated) centroid based on the previous assignment will most likely be greater than zero.

\begin{lemma}
\label{lem:squaredSum}
Given two unit-length vectors $\mathbf{c} = [ c_1, ..., c_n ]^\top, \mathbf{x} = [ x_1, ..., x_n ]^\top \in \mathbb{R}^n$ and let $S = \lbrace a_1, ..., a_m \rbrace, a_i \in \mathbb{N}$ be the set of indexes that correspond to a non-zero value in $\mathbf{x}$. That is, $x_i \neq 0$ if and only if $i \in S$.
If $\mathbf{c} \cdot \mathbf{x} \geq \lambda$ then it holds that $\sum_{i=1}^{m}  c_{a_i}^2 \geq \lambda^2$.
\end{lemma}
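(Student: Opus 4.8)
The plan is to reduce the statement to a single application of the Cauchy--Schwarz inequality, restricted to the support $S$ of $\mathbf{x}$. First I would observe that since $x_i = 0$ for every $i \notin S$, the dot product collapses to a sum over $S$ only, namely $\mathbf{c} \cdot \mathbf{x} = \sum_{i \in S} c_i x_i = \sum_{j=1}^{m} c_{a_j} x_{a_j}$. This is the one structural fact that makes the lemma tick: the coordinates of $\mathbf{c}$ outside $S$ never enter the dot product, so only the partial sum $\sum_{j=1}^{m} c_{a_j}^2$ can be "responsible" for how large $\mathbf{c} \cdot \mathbf{x}$ can be.

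Next I would apply Cauchy--Schwarz to the two length-$m$ vectors $(c_{a_1}, \dots, c_{a_m})$ and $(x_{a_1}, \dots, x_{a_m})$, obtaining $\sum_{j=1}^{m} c_{a_j} x_{a_j} \le \sqrt{\sum_{j=1}^{m} c_{a_j}^2}\,\sqrt{\sum_{j=1}^{m} x_{a_j}^2}$. Because $\mathbf{x}$ has unit length and \emph{all} of its nonzero entries lie in $S$, the second factor equals $\sqrt{\|\mathbf{x}\|_2^2} = 1$; note that the unit-length assumption on $\mathbf{c}$ is not actually needed for this direction. Chaining this with the hypothesis $\mathbf{c} \cdot \mathbf{x} \ge \lambda$ gives $\lambda \le \mathbf{c} \cdot \mathbf{x} \le \sqrt{\sum_{j=1}^{m} c_{a_j}^2}$, after which squaring both sides yields $\lambda^2 \le \sum_{j=1}^{m} c_{a_j}^2$, which is exactly the claim. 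The only delicate point is that squaring preserves an inequality only when the smaller side is nonnegative, so the argument formally needs $\lambda \ge 0$; in the intended use $\lambda$ is a cosine similarity to the (possibly updated) previous centroid and is nonnegative, so this is harmless, and for arbitrary $\lambda$ the statement should simply be read with $\lambda$ replaced by $\max(\lambda, 0)$.

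I do not expect any genuine obstacle here. The "hard part" is purely bookkeeping: recognizing that Cauchy--Schwarz must be applied on the restricted index set $S$ rather than on all $n$ coordinates (otherwise one gets the trivial bound $\lambda^2 \le \|\mathbf{c}\|_2^2 = 1$), and remembering the nonnegativity condition needed to square the inequality.
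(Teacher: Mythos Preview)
Your proof is correct and follows essentially the same route as the paper: both restrict to the support $S$, then bound $\hat{\mathbf{c}}\cdot\hat{\mathbf{x}}$ by $\lVert\hat{\mathbf{c}}\rVert$ using Cauchy--Schwarz (the paper phrases this geometrically via $\hat{\mathbf{c}}\cdot\hat{\mathbf{x}} = \lVert\hat{\mathbf{c}}\rVert\,\lVert\hat{\mathbf{x}}\rVert\cos\hat{\theta}$ together with $\cos\hat{\theta}\le 1$, which is the same inequality). Your version is in fact slightly more careful, since you flag the implicit $\lambda \ge 0$ assumption needed to square the inequality and note that $\lVert\mathbf{c}\rVert = 1$ is never used.
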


\begin{proof}
Let $\mathbf{c} \cdot \mathbf{x} \geq \lambda, \hat{\mathbf{c}} = [ c_{a_1}, ..., c_{a_m} ]^\top, \hat{\mathbf{x}} = [ x_{a_1}, ..., x_{a_m} ]^\top$.
It follows that $\mathbf{c} \cdot \mathbf{x} = \hat{\mathbf{c}} \cdot \hat{\mathbf{x}}$, because $\hat{\mathbf{x}}$ comprises all non-zero entries of $\mathbf{x}$.
We can rewrite the dot product as follows: $\hat{\mathbf{c}} \cdot \hat{\mathbf{x}} = \Vert \hat{\mathbf{c}} \Vert \Vert \hat{\mathbf{x}} \Vert \cos \hat{\theta}$ where $\hat{\theta}$ is the angle between both vectors.
It follows that $\Vert \hat{\mathbf{c}} \Vert \cos \hat{\theta} \geq \lambda$, because $\Vert \hat{\mathbf{x}} \Vert = 1$.
It holds that $\cos \hat{\theta} \leq 1$.
Thus, it follows that $ \sqrt{\sum_{i=1}^{m}  c_{a_i}^2} = \Vert \hat{\mathbf{c}} \Vert \geq \lambda$.
\end{proof}

Applied to an input vector $\mathbf{x}$ and a centroid $\mathbf{c}$, it means that the cosine similarity can only be $\lambda$ or higher if the sum of the squared \emph{centroid} values of the overlapping non-zero entries equates to at least $\lambda^2$.
Based on this, we can build an indexing structure for a given minimum dot product of $\lambda$.

Given a sparse centroid $\mathbf{c}_i$ that we want to add to the structure, we first sort the index-value pairs of the present entries in $\mathbf{c}_i$ in descending order of their value.
For each pair, we then perform the following steps:

\begin{enumerate}
\item We add the index with the centroid ID $i$ to the general index map $G$. This corresponds to the basic indexing structure that we outlined at the beginning of this section. That is, for a given index, we can then retrieve a list of centroids that have a non-zero value at the corresponding position.
\item If the value is greater than or equal to $\lambda$, we immediately add the index with a \emph{minimum overlap count} of 1 and the ID to the index map $P$. If a query vector has a non-zero value at that position, it could already be enough to lead to a dot product $\geq \lambda$, hence the overlap count of 1.
\item If the value is lower than $\lambda$, a query vector cannot reach the required minimum dot product if it only shares a non-zero entry with $\mathbf{c}_i$ at that position. We iterate through the next pairs and sum up the squared values (including the current pair), until we reach our threshold of $\lambda^2$. The number of affected pairs is then our minimum overlap count, which follows from Lemma~\ref{lem:squaredSum}. We do not need to take the previous (higher) values into account. We can assume that these pairs do not overlap with the query vector since the previously added entries to $P$ would have already covered such a case. If we cannot reach the threshold, we stop the process for this centroid. Otherwise, we add the index with the determined count and the ID to the index map $P$ and proceed to the next pair\footnote{A naive implementation of this step would result in a worst-case time complexity of $\mathcal{O}(m^2)$ for adding a centroid with $m$ non-zero entries. We can perform this step in linear time, though. After reaching the threshold, we save the end position. Before we proceed to the next pair, we first subtract the squared value of the current pair from the sum. For the next pair, we can then continue the summation from the previous end position until we reach our threshold. Thus, we add and subtract each squared value at most once.}.
\end{enumerate}

Given an input vector $\mathbf{x}_i$ as query and the minimum dot product of $\lambda$, we determine the list of centroids for which we need to compute the cosine similarity as follows:

\begin{enumerate}
\item For each non-zero entry of $\mathbf{x}_i$, we retrieve the list of overlapping centroids using $G$ and increment our local count map $C$ for each of the IDs in the list. Given a centroid, we can then use $C$ to determine the number of overlapping non-zero entries with $\mathbf{x}_i$.
\item We iterate again through the entries of $\mathbf{x}_i$. For each entry, we retrieve the list of centroid candidates and the corresponding minimum overlap counts using $P$.
We add those candidates to our resulting set that meet the minimum overlap count, which we can determine using $C$.
\end{enumerate}

This indexing structure helps to accelerate the Spherical $k$-Means algorithm.
During the assignment step, we first build the index from the current centroids.
Given an input vector, we calculate the dot product with the centroid that corresponds to the assignment in the preceding iteration, which serves as a baseline.
If the calculated similarity is at least as high as our threshold $\lambda$, we can query our structure to retrieve a (possibly) shorter list of centroids for which we need to compute the dot product.
We can guarantee that all other centroids would result in a dot product $< \lambda$.

The higher $\lambda$, the smaller the expected number of items in our centroid list, but also the smaller the percentage of input items that meet the required baseline dot product.
To improve this trade-off, we build several such indexing structures with different values of $\lambda$.
Upon retrieval, we select the structure with the highest threshold which is still below the respective baseline value.
The general index map $G$ has to be built only once since it does not depend on the threshold.
It is possible to combine this strategy with the first one.
In this case, we would just further filter the returned list of centroids according to the rules outlined in the previous Section.

\section{Evaluation}

We tested our adapted document clustering approach on one million tweets and 200,000 ArXiv paper abstracts, with $k$ ranging between 50 and 5,000, to evaluate the impact of the strategies on the running time of the clustering.

\subsection{Test Setup}

We randomly sampled one million English tweets from a collection we fetched using the Twitter API and 200,000 paper abstracts (including the title) from ArXiv\footnote{https://www.kaggle.com/Cornell-University/arxiv}, excluding documents that only contain stop words to avoid zero vectors.
We tokenized the documents and converted them into a sparse TF-IDF-weighted Bag-of-Words representation, that is, for each present term in a document, we set the value of the corresponding dimension to the term frequency multiplied with the logarithm of the inverse document frequency of the term.
We ignored very frequent words (stop words) and divided each vector by its length to obtain unit-length vectors.
We did not truncate the vocabulary, leading to 325,556-dimensional paper abstract and 783,304-dimensional tweet vectors, each containing on average 58 and 10 non-zero entries, respectively.

We compared two modes with the baseline algorithm outlined in Section~\ref{sec:sphericalKMeans} using three different values of $k$: 50, 500, and 5,000.
The first mode only utilizes our non-changing clusters strategy (NCC), and the second one represents the full approach with both strategies enabled (NCC+INDEX).
For the full approach, we chose $\lbrace 0.1, 0.25, 0.4, 0.6 \rbrace$ as our set of minimum dot products.
For all modes, the clustering loop terminates if the assignments do not change anymore or the centroids largely stay the same (maximum squared Euclidean distance between any two subsequent centroids is $< 0.0001$).
We ran each configuration five times on a 32-core CPU and report on the median running time.

\subsection{Results}

\begin{table}
\centering
\caption{The median running times of our strategies (in minutes) on 1m tweets and 200k paper abstracts depending on different cluster sizes (lower is better). \emph{NCC} refers to our non-changing cluster strategy and \emph{INDEX} to our dot product indexing structure. \emph{NCC+INDEX} represents the full approach.}
\begingroup
\setlength{\tabcolsep}{4pt} % Default value: 6pt
\begin{tabular}{lrrrrrr}
\hline
	&	\multicolumn{3}{c}{\textbf{Tweets (1m)}} &	\multicolumn{3}{c}{\textbf{Abstracts (200k)}}  \\
	& $k = 50$ & $500$ & $5000$ & $k = 50$ & $500$ & $5000$ \\ \hline
Baseline              & 0.5 & 11.1 & 44.3 & 0.3 & 13.3 & 31.7 \\ \hline
NCC  	              & \textbf{0.4} &          6.1  & 18.1          & \textbf{0.3} & 7.3          & 13.9 \\
\textbf{NCC+INDEX}    & 0.7          &  \textbf{2.1} &  \textbf{2.0} & 0.6          & \textbf{5.6} & \textbf{5.5} \\ \hline

\end{tabular}
\label{tab:benchmark}
\endgroup
\end{table}

\begin{figure}
  \centering
  \includegraphics[width=\linewidth]{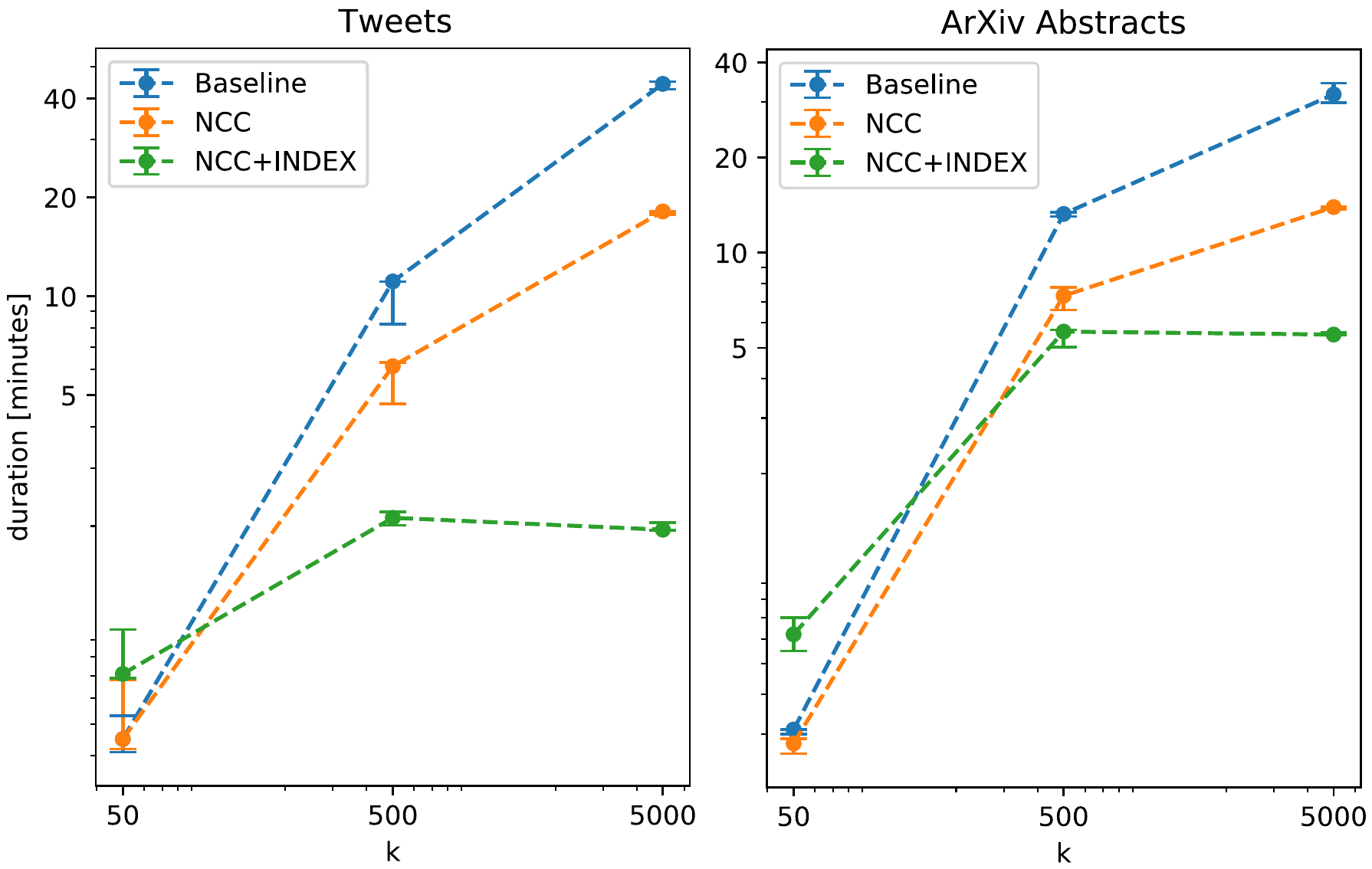}
  \caption{The median running time of our strategies, plotted on logarithmic scales. The bars indicate the interquartile range. The dots are connected with a dotted line to support the visual tracking of a specific configuration.}
  \label{fig:benchmark}
\end{figure}

Table~\ref{tab:benchmark} lists the results that are also plotted in Figure~\ref{fig:benchmark} on a logarithmic scale.
The error bars denote the interquartile ranges.
Our non-changing clusters strategy leads to shorter running times across all configurations in both data sets.
For larger values of $k$, the indexing structure further accelerates the clustering.
We observed a more than 20-fold reduction of the time it takes to cluster one million tweets into five thousand clusters, and a more than fivefold reduction in the case of the less sparse abstracts.
In contrast to the baseline scenario, clustering the documents into 5,000 instead of 500 clusters did not take more time with our approach on the two data sets.
For $k = 50$, our indexing structure cannot offset the additional overhead it introduces, resulting in slightly longer running times.

\subsection{Discussion}

In the case of sparse input vectors, our approach can significantly accelerate the Spherical $k$-Means algorithm.
Given a fixed number of input documents, the efficiency of the indexing structure typically increases with higher cluster sizes because the probability that a frequent word is an important component of many centroids decreases.
This explains why the running time does not seem to increase above $k = 500$.

Building the index on every iteration takes time, which does not pay off for smaller cluster sizes.
It is therefore advisable to enable the \emph{INDEX} strategy dynamically whenever the number of clusters that have changed compared to the previous iteration exceeds a certain threshold (e.g., 100).

It may seem odd that the duration it takes to process 500 clusters in the baseline scenario is more than two times longer than we would expect from the running time of processing 50 clusters.
One reason for this behavior is the available cache size.
If the number of clusters is sufficiently small, the centroids may fit completely into the cache of the processor, reducing expensive memory fetches.
For larger values of $k$ and, thus, increased memory usage, the percentage of cache misses increases, which reduces the computational efficiency significantly.

\section{Conclusion}

We proposed two strategies to accelerate the Spherical $k$-Means clustering algorithm for sparse input vectors such as document representations.
The first strategy exploits the observation that in later iterations, more and more cluster centroids remain stable between subsequent iterations.
The second strategy utilizes an indexing structure for unit-length vectors that we proposed.
Given an input vector, the index enables us to retrieve a filtered set of centroids for which we need to compute the cosine similarity with to find the most similar centroid.
Our benchmarks show that our approach leads to significant shorter running times, making a fine-grained cluster-based analysis of large document collections with $k \gg 10$ much more feasible.

%%
%% The acknowledgments section is defined using the "acks" environment
%% (and NOT an unnumbered section). This ensures the proper
%% identification of the section in the article metadata, and the
%% consistent spelling of the heading.
\begin{acks}
This research was supported by the German Science Foundation (DFG) as part of the project VAOST (project number 392087235) and as part of the Priority Program VA4VGI (SPP 1894).
\end{acks}

%%
%% The next two lines define the bibliography style to be used, and
%% the bibliography file.
\bibliographystyle{ACM-Reference-Format}
\bibliography{ess-kmeans-paper}

%%
%% If your work has an appendix, this is the place to put it.
%\appendix
%
%\section{Research Methods}
%
%\subsection{Part One}
%
%Lorem ipsum dolor sit amet, consectetur adipiscing elit. Morbi
%malesuada, quam in pulvinar varius, metus nunc fermentum urna, id
%sollicitudin purus odio sit amet enim. Aliquam ullamcorper eu ipsum
%vel mollis. Curabitur quis dictum nisl. Phasellus vel semper risus, et
%lacinia dolor. Integer ultricies commodo sem nec semper.
%
%\subsection{Part Two}
%
%Etiam commodo feugiat nisl pulvinar pellentesque. Etiam auctor sodales
%ligula, non varius nibh pulvinar semper. Suspendisse nec lectus non
%ipsum convallis congue hendrerit vitae sapien. Donec at laoreet
%eros. Vivamus non purus placerat, scelerisque diam eu, cursus
%ante. Etiam aliquam tortor auctor efficitur mattis.
%
%\section{Online Resources}
%
%Nam id fermentum dui. Suspendisse sagittis tortor a nulla mollis, in
%pulvinar ex pretium. Sed interdum orci quis metus euismod, et sagittis
%enim maximus. Vestibulum gravida massa ut felis suscipit
%congue. Quisque mattis elit a risus ultrices commodo venenatis eget
%dui. Etiam sagittis eleifend elementum.
%
%Nam interdum magna at lectus dignissim, ac dignissim lorem
%rhoncus. Maecenas eu arcu ac neque placerat aliquam. Nunc pulvinar
%massa et mattis lacinia.

\end{document}